\newtheorem{lemma}{Lemma}
\newtheorem{definition}{Definition}
\newtheorem{theorem}{Theorem}
\newcommand{\R}{\mathbb{R}}
\newcommand{\e}{\begin{equation}}
\newcommand{\ee}{\end{equation}}
\newcommand{\en}{\begin{equation*}}
\newcommand{\een}{\end{equation*}}
\newcommand{\eqn}{\begin{eqnarray}}
\newcommand{\eeqn}{\end{eqnarray}}
\newcommand{\bmat}{\begin{bmatrix}}
\newcommand{\emat}{\end{bmatrix}}
\DeclareMathAlphabet\mathbfcal{OMS}{cmsy}{b}{n}
\newcommand{\vct}[1]{\boldsymbol{#1}}
\newcommand{\mtx}[1]{\boldsymbol{#1}}
\newcommand{\bmtx}[1]{\mathbf{#1}}
\newcommand{\<}{\langle}
\renewcommand{\>}{\rangle}
\def \st {\operatorname*{s.t.\ }}
\newcommand{\wh}{\widehat}
\newcommand{\calC}{\mathcal{C}}
\newcommand{\calH}{\mathcal{H}}
\newcommand{\calP}{\mathcal{P}}
\newcommand{\va}{\vct{a}}
\newcommand{\vx}{\vct{x}}
\newcommand{\vy}{\vct{y}}
\newcommand{\mA}{\mtx{A}}
\newcommand{\mB}{\mtx{B}}
\newcommand{\mC}{\mtx{C}}
\newcommand{\mH}{\mtx{H}}
\newcommand{\mR}{\mtx{R}}
\newcommand{\mW}{\mtx{W}}
\newcommand{\mX}{\mtx{X}}
\newcommand{\mY}{\mtx{Y}}
\newcommand{\mTheta}{\mtx{\Theta}}
\newcommand{\mId}{\bmtx{I}}
\newlength{\imgwidth}
\renewcommand{\mathbf}{\boldsymbol}
\def \endprf{\hfill {\vrule height6pt width6pt depth0pt}\medskip}
\newenvironment{proof}{\noindent {\bf Proof} }{\endprf\par}
\def\@IEEEsectpunct{\ \,}
\def\paragraph{\@startsection{paragraph}{4}{\z@}{1.5ex plus 1.5ex minus 0.5ex}%
{0ex}{\bfseries}}
\begin{document}

\title{Convergence Analysis for Learning Orthonormal Deep Linear Neural Networks}

\author{Zhen Qin,  Xuwei Tan, and Zhihui Zhu, \IEEEmembership{Member, IEEE}
\thanks{Z. Qin, X. Tan and Z. Zhu are with the Department of Computer Science and Engineering, the Ohio State University, OH 43210, USA (e-mail: \{qin.660, tan.1206, zhu.3440\}@osu.edu).

This work was partly supported by NSF grants CCF-2240708 and CCF-2241298.
}
}

\maketitle

\begin{abstract}
Enforcing orthonormal or isometric property for the weight matrices has been shown to enhance the training of deep neural networks by mitigating gradient exploding/vanishing and increasing the robustness of the learned networks. However, despite its practical performance, the theoretical analysis of orthonormality in neural networks is still lacking; for example, how orthonormality affects the convergence of the training process. In this letter, we aim to bridge this gap by providing convergence analysis for training orthonormal deep linear neural networks.
Specifically, we show that Riemannian gradient descent with an appropriate initialization converges at a linear rate for training orthonormal deep linear neural networks with a class of loss functions. Unlike existing works that enforce orthonormal weight matrices for all the layers, our approach excludes this requirement for one layer, which is crucial to establish the convergence guarantee. Our results shed light on how increasing the number of hidden layers can impact the convergence speed. Experimental results validate our theoretical analysis.
\end{abstract}

\begin{IEEEkeywords}
Deep neural networks, orthonormal structure, convergence analysis, Riemannian optimization
\end{IEEEkeywords}

\IEEEpeerreviewmaketitle

\section{Introduction}
\label{Intro}

\IEEEPARstart{E}{forcing} orthonormal or isometric properties of the weight matrices has numerous advantages for the practice of deep learning: (i) it provides a better initialization \cite{mishkin2015all,saxe2013exact}, (ii) it mitigates the problem of exploding/vanishing gradients during training \cite{bengio1994learning, pascanu2013difficulty,le2015simple,arjovsky2016unitary,hanin2018neural}, (iii) the resulting {\it orthonormal neural networks} \cite{saxe2013exact,harandi2016generalized,li2019orthogonal,huang2020controllable,liefficient20LCLR,wang2020orthogonal,malgouyres2022existence,dorobantu2016dizzyrnn,mhammedi2017efficient,vorontsov2017orthogonality} exhibit improved robustness \cite{cisse2017parseval} and reduced overfitting issues \cite{cogswell2015reducing}.

Various approaches have been proposed for training neural networks, mainly falling into two categories: soft orthonormality and hard orthonormality. The first category of methods, such as those in \cite{cisse2017parseval, huang2020controllable, wang2020orthogonal, bansal2018can}, adds an additional orthonormality regularization term to the training loss, resulting in weight matrices that are approximately orthonormal. In contrast, the other methods, as found in \cite{huang2018orthogonal, li2019orthogonal, liefficient20LCLR}, learn weight matrices that are exactly orthonormal through the use of Riemannian optimization algorithms on the Stiefel manifold.

While orthonormal neural networks demonstrate strong practical performance, there remains a gap in the theoretical analysis of orthonormality in neural networks. For example, convergence analysis for training neural networks has been extensively studied \cite{bartlett2018gradient, arora2018convergence,zou2020global,shamir2019exponential,allen2019convergence, chatterjee2022convergence,zhou2021local,zhang2019learning}. However, all these results focus on standard training without orthonormal constraints, making them inapplicable to the training of orthonormal neural networks. To the best of our knowledge, there is a lack of rigorous convergence analysis even for orthonormal deep linear neural networks (ODLNNs).
Despite its linear structure, a deep linear neural network still presents a non-convex training problem and has served as a testbed for understanding deep neural networks \cite{bartlett2018gradient,zou2020global,arora2018convergence,shamir2019exponential}.
In this letter, we aim to understand the effect of the orthonormal structure on the training process by studying ODLNN.

\vspace{-0.2cm}
\paragraph*{Our contribution}  Specifically, we provide a local convergence rate of Riemannian gradient descent (RGD) for training the ODLNN. To achieve this, unlike existing works \cite{cisse2017parseval,li2019orthogonal,liefficient20LCLR} that impose orthonormal constraints on all the weight matrices, we exclude such a constraint for one layer (say the weight matrix in the first hidden layer). The exclusion of a specific layer plays a crucial role in analyzing the convergence rate.
Our findings demonstrate that within a specific class of loss functions, adhering to the restricted correlated gradient condition \cite{Han20}, the RGD algorithm exhibits linear convergence speed when appropriately initialized. Notably, our results also indicate that as the number of layers in the network increases, the rate of convergence only experiences a polynomial decrease. The validity of our theoretical analysis has been confirmed by experiments.

{\bf Notation}: We use bold capital letters (e.g., $\bm{A}$) to denote matrices,  bold lowercase letters (e.g., $\bm{a}$) to denote vectors, and italic letters (e.g., $a$) to denote scalar quantities.
The superscript $(\cdot)^\top$ denotes the transpose.
$\|\mA\|$ and $\|\mA\|_F$ respectively represent the spectral norm and Frobenius norm of $\mA$. $\sigma_{\min}(\mA)$ is the smallest singular value of $\mA$. The condition number of $\mA$ is defined as $\kappa(\mA) = \frac{\|\mA\|}{\sigma_{\min}(\mA)}$. $\|\va\|_2$ is the $l_2$ norm of $\va$. For a positive integer $K$, $[K]$ denotes the set $\{1,\dots, K \}$. $b = \Omega(a)$ represents $b\ge ca$ for some universal constant $c$.

\vspace{-0.3cm}
\section{Riemannian Gradient Descent  for Orthonormal Deep Linear Neural Networks}
\label{Deep Linear Neural Model}

\paragraph*{Problem statement} Given a training set $\{ (\vx_i, \vy_i^\star) \}_{i=1}^n\in \R^{d_x}\times \R^{d_y} $, our goal is to estimate a hypothesis (predictor) from a parametric family $\calH := \{ h_{\theta}: \R^{d_x}\ \to \R^{d_y} | \theta\in\mTheta \} $ by minimizing the following empirical risk:
\begin{eqnarray}
\label{Deep  Neural Model - general model}
    \min_{\theta\in\mTheta} g(\theta) = \frac{1}{n}\sum_{i=1}^{n} l(h_{\theta}(\vx_i);\vy_i^\star),
\end{eqnarray}
where $l(h_{\theta}(\vx_i);\vy_i^\star)$ is a suitable loss that captures the difference between the network prediction $h_{\theta}(\vx_i)$ and the label $\vy_i^\star$.
For convenience, we stack all the training samples together as $\mX = \begin{bmatrix}\vx_1 & \cdots & \vx_n \end{bmatrix}$ and $\mY^\star = \begin{bmatrix}\vy_1^\star & \cdots & \vy_n^\star \end{bmatrix}$.

Our main focus is on orthonormal deep linear neural networks (ODLNNs), which are fully-connected neural networks of form $h_\theta(\vx_i) = \mW_N \cdots \mW_1 \vx_i$ with $\mW_i\in\R^{d_i\times d_{i-1}}$ for $i\in[N]$, where $d_0 = d_x$ and $d_N = d_y$. In ODLNNs, we further assume that the weight matrices to be row orthogonal or column orthogonal depending on the dimension. Without loss of generality, we assume that all the matrices $\{\mW_i\}_{i\geq 2}$ are column orthonormal, except for $\mW_1$. This is different to the previous works \cite{cisse2017parseval,li2019orthogonal,liefficient20LCLR} which impose orthonormal constraints on all the weight matrices.  Allowing $\mathbf{W}_1$ to be unstructured offers more flexibility, as otherwise $\mathbf{W}_N\cdots \mathbf{W}_1$ can only represent an orthonormal matrix, which restricts the output the same energy as input (i.e., $\|\vy_i^\star\|_2= \|\vx_i\|_2$). The choice of free weight matrix can vary, and the following analysis would still hold. Now the training loss can be written as
\begin{eqnarray}
    \label{DLN_LOSS_FUNCTION_1 general}
    \begin{split}
 \min_{\mbox{\tiny$\begin{array}{c}
     \mW_i\in\R^{d_{i}\times d_{i-1}}\\
     i\in [N]\end{array}$}} & \!\!\!\! g(\mW_N,\dots, \mW_1) =  L(\mW_N \cdots \mW_1\mX;\mY^\star),\\
    &\!\! \st \ \mW_i^\top\mW_i=\mId_{d_{i-1}}, \ \ i=2,\dots N,
    \end{split}
\end{eqnarray}
where $L$ denotes a loss function encompassing all samples.

\begin{definition} [Data model]
Following the previous work on deep linear neural networks \cite{bartlett2018gradient,arora2018convergence}, we assume that the dataset $\mX$ is whitened, i.e., its empirical covariance matrix is an identity matrix as $\mX\mX^\top = \mId_{d_x}$. Also assume that the output is generated by a teacher ODLNN model\footnote{Here, for the sake of simplifying the subsequent analysis, we designate the teacher model as the ODLNN, which can encompass any linear model.}, i.e.,  $\mY^\star = \mW_N^\star \cdots \mW_1^\star \mX$, where $\mW_i^\star\in \R^{d_i\times d_{i-1}}$ and
$\{\mW_i^\star\}_{i\ge 2}$ are column orthonormal matrices.
\label{def:data-model}\end{definition}

\vspace{-0.25cm}
\paragraph*{Stiefel manifold}

The Stiefel manifold $\text{St}(m,n)=\{\mC\in\R^{m\times n}: \mC^\top\mC=\mId_{n}\}$  is a Riemannian manifold that is composed of all $m\times n$ orthonormal matrices. We can regard $\text{St}(m,n)$ as an embedded submanifold of a Euclidean space and further define $\text{T}_{\mC} \text{St}:=\{\mA\in\R^{m\times n}: \mA^\top \mC+\mC^\top\mA={\bm 0} \}$ as its tangent space at the point $\mC\in\text{St}(m,n)$.
For any $\mB\in\R^{m\times n}$, the projection of $\mB$ onto $\text{T}_{\mC} \text{St}$ is given by \cite{LiSIAM21}
\begin{eqnarray}
\label{projection on the tangent space of Stiefel}
    \calP_{\text{T}_{\mC} \text{St}}(\mB)=\mB-\frac{1}{2}{\mC}(\mB^\top\mC+\mC^\top\mB),
\end{eqnarray}
and its orthogonal complement is $\calP_{\text{T}_{\mC} \text{St}}^{\perp}(\mB)= \mB - \calP_{\text{T}_{\mC} \text{St}}(\mB) $$ = \frac{1}{2}\mC(\mB^\top\mC+\mC^\top\mB)$. When we have a gradient $\mB$ defined in the Hilbert space, we can use the projection operator \eqref{projection on the tangent space of Stiefel} to compute the Riemannian gradient $\calP_{\text{T}_{\mC} \text{St}}(\mB)$ on the tangent space of the Stiefel manifold. To project $\wh \mC = \mC - c \calP_{\text{T}_{\mC} \text{St}}(\mB)$ with any positive constant $c$  back onto the Stiefel manifold, we can utilize the polar decomposition-based retraction, i.e.,
\vspace{-0.08cm}
\begin{eqnarray}
    \label{polar decomposition-based retraction}
    \text{Retr}_{\mC}(\wh \mC)=\wh \mC(\wh \mC^\top \wh \mC)^{-\frac{1}{2}}.
\end{eqnarray}

\paragraph*{Riemannian gradient descent (RGD)} Given the gradient $\nabla_{\mW_i}g({\mW_N^{(t)}},\dots, {\mW_1^{(t)}})$,  we can compute the Riemannian gradient $\calP_{\text{T}_{\mW_i} \text{St}}\big(\nabla_{\mW_i}g({\mW_N^{(t)}},\dots, {\mW_1^{(t)}}) \big)$  on the Stiefel manifold via \eqref{projection on the tangent space of Stiefel}.
To streamline the notation, let us represent $\nabla_{\mW_i}g({\mW_N^{(t)}},\dots, {\mW_1^{(t)}})$ as $\nabla_{\mW_i^{(t)}}g$. Now the weight matrices can be updated via the following RGD:
\begin{equation}
\begin{split}
    & \mW_1^{(t+1)} ={{\mW_1}^{(t)}}-\mu\gamma\nabla_{\mW_1^{(t)}}g,\\
    &\mW_i^{(t+1)} =\text{Retr}_{\mW_i}\Big({{\mW}_i^{(t)}}-\mu\calP_{\text{T}_{\mW_i} \text{St}}\big(\nabla_{\mW_i^{(t)}}g \big) \Big), \ i \ge 2,
\end{split}
\label{SGD_GRADIENT_DESCENT_1_2}
\end{equation}
where $\mu>0$ is the learning rate for $\{\mW_i\}$ and $\gamma>0$ controls the ratio between the learning rates for $\mW_1$ and $\{\mW_i\}_{i\ge 2}$.
The discrepant learning rates in \eqref{SGD_GRADIENT_DESCENT_1_2} are used to accelerate the convergence rate of $\mW_1$ since the energy of $\mW_1^\star$ and $\{\mW_i^\star\}_{i\ge 2}$ are unbalanced, i.e., $\|\mW_1^\star\|^2 = \|\mY^\star\|^2$ and $\|\mW_i^\star\|^2=1$ when the dataset $\mX$ is whitened, i.e. $\mX\mX^\top = \mId_{d_x}$.

\vspace{-0.3cm}
\section{Convergence Analysis}
\label{Sec: Convergence analysis}

\vspace{-0.1cm}
In this section, we will delve into the convergence rate analysis of RGD for training ODLNNs. Towards that goal, we will use the teacher model introduced in \Cref{def:data-model} that the training samples $(\mX,\mY^\star)$ are generated according to $\mY^\star = \mW_N^\star \cdots \mW_1^\star \mX$. Given the nonlinear nature of the retraction operation in the RGD, we will study the convergence in terms of the weight matrices ${\rm W} = \{ \mW_i \}$ and ${\rm W}^\star = \{{\mW_i^{\star}} \}$. But we will show that the convergence can be equivalently established in terms of the outputs.

\vspace{-0.15cm}
\paragraph*{Distance measure}  Consider that the factors in  ${\rm W}^\star$ are identifiable up to orthonormal transforms since $\mW_N^\star\cdots\mW_1^\star\mX = \mW_N^\star\mR_{N-1} \mR_{N-1}^\top\mW_{N-1}^\star \mR_{N-2} \cdots \mW_1^\star\mX$ for any orthonormal matrices $\mR_i\in\mathbb{O}^{d_{i}\times d_{i}},i\in[N-1]$. Also, $\mW_1^\star$ and $\mW_i^\star$ could be imbalanced as $\mW_i^\star$ are orthonormal for $i\ge 2$. This discrepancy can be quantified by observing that $\|\mW_1^\star\| = \|\mY^\star\|$ (since the input matrix $\mX$ is whitened) and $\|{\mW_i^{\star}}\| = 1$ for all $i \ge 2$.
Thus, we propose the following measure to capture the distance between two sets of factors:
\vspace{-0.1cm}
\begin{eqnarray}
\label{BALANCED NEW DISTANCE BETWEEN TWO Weight matrices}
\text{dist}^2({\rm W},{\rm W}^\star)\! = \!\!\!\!\!\!\min_{\mR_i\in\mathbb{O}^{d_{i}\times d_{i}}, \atop i\in[N-1]}&\!\!\!\!\!\!\!\!\!\!\!\!&\sum_{i=2}^{N} \|\mY^\star\|^2\|\mW_i-\mR_{i}^\top{\mW_i^{\star}}\mR_{i-1}\|_F^2\nonumber\\
    &\!\!\!\!\!\!\!\!\!\!\!\!\!& + \|\mW_1 - \mR_1^\top\mW_1^\star\|_F^2.
\end{eqnarray}
Here the coefficient $\|\mY^\star\|^2$ is to harmonize the energy levels between ${\rm W} = \{ \mW_i \}_{i\geq 2}$ and $\mW_1$. The following result elucidates the connection between $\text{dist}^2({\rm W},{\rm W}^\star)$ and $\|\mY-\mY^\star\|_F^2$, guaranteeing the convergence of $\mY$ as ${\rm W}$ approaches the global minima.
\begin{lemma}
\label{LOWER BOUND OF TWO DISTANCES IN DNN}
Assume a whitened input $\mX\in\R^{d_x\times n}$, i.e. $\mX\mX^\top = \mId_{d_x}$.
Let $\mY \!=\! \mW_N \cdots \mW_1 \mX$ and $\mY^\star \!=\! \mW_N^\star\cdots \mW_1^\star \mX$,  where  $\mW_i, \mW_i^\star\in\R^{d_i \times d_{i-1}}$ are orthonormal for $i=2,\dots,N$. Given that $\|\mW_1\|^2\leq\frac{9\|\mY^\star\|^2}{4}$ and $\|\mW_1^\star\|^2 = \|\mY^\star\|^2$, we can get
\vspace{-0.1cm}
\begin{eqnarray}
    \label{LOWER BOUND OF TWO DISTANCES IN DNN_1}
    \|\mY-\mY^\star\|_F^2&\!\!\!\!\geq\!\!\!\!&\frac{1}{(16N-8)\kappa^2(\mY^\star)}\text{dist}^2({\rm W},{\rm W}^\star),\\
    \label{UPPER BOUND OF TWO DISTANCES IN DNN_1}
    \|\mY-\mY^\star\|_F^2&\!\!\!\!\leq\!\!\!\!&\frac{9N}{4}\text{dist}^2({\rm W},{\rm W}^\star).
\end{eqnarray}
\end{lemma}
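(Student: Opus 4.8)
I would first strip the data out. Since $\mX\mX^\top=\mId_{d_x}$, right multiplication by $\mX$ is a Frobenius isometry on $d_x$-column matrices, so with $\mP:=\mW_N\cdots\mW_1$ and $\mP^\star:=\mW_N^\star\cdots\mW_1^\star$ we have $\|\mY-\mY^\star\|_F=\|\mP-\mP^\star\|_F$; it therefore suffices to prove both inequalities with $\|\mP-\mP^\star\|_F$ in place of $\|\mY-\mY^\star\|_F$. I would also record three elementary facts used throughout: a product of column-orthonormal matrices is column-orthonormal (so $\|\mW_N\cdots\mW_k\|=1$ for $k\ge2$ and $\|\mW_N\cdots\mW_2\,\mW_1\|=\|\mW_1\|$); the nearest-orthonormal (polar) projection of a matrix whose singular values lie in $[0,1]$ loses at most the Frobenius ``deficit'' $\sum_j(1-\sigma_j^2)$ in squared distance, because $(1-s)^2\le1-s^2$ on $[0,1]$; and $\kappa(\mY^\star)=\kappa(\mW_1^\star)$, hence $\sigma_{\min}(\mW_1^\star)=\|\mY^\star\|/\kappa(\mY^\star)$.

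\noindent\emph{Upper bound \eqref{UPPER BOUND OF TWO DISTANCES IN DNN_1}.} This is the routine direction. For \emph{any} orthonormal $\mR_1,\dots,\mR_{N-1}$ (set $\mR_0=\mR_N=\mId$) put $\wt\mW_i^\star:=\mR_i^\top\mW_i^\star\mR_{i-1}$; the inner rotations telescope so $\wt\mW_N^\star\cdots\wt\mW_1^\star=\mP^\star$, each $\wt\mW_i^\star$ with $i\ge2$ is column-orthonormal, and $\|\wt\mW_1^\star\|=\|\mW_1^\star\|=\|\mY^\star\|$. Writing
\begin{equation*}
\mP-\mP^\star=\sum_{k=1}^{N}(\mW_N\cdots\mW_{k+1})\,(\mW_k-\wt\mW_k^\star)\,(\wt\mW_{k-1}^\star\cdots\wt\mW_1^\star)
\end{equation*}
and bounding the left outer factor by $1$ and the right outer factor by $\|\mY^\star\|$ when $k\ge2$ (by $1$ when $k=1$) in spectral norm gives $\|\mP-\mP^\star\|_F\le\|\mW_1-\mR_1^\top\mW_1^\star\|_F+\|\mY^\star\|\sum_{k=2}^{N}\|\mW_k-\mR_k^\top\mW_k^\star\mR_{k-1}\|_F$. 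Squaring, applying Cauchy--Schwarz over the $N$ summands, and then choosing the $\mR_i$'s that attain $\text{dist}^2({\rm W},{\rm W}^\star)$ yields $\|\mP-\mP^\star\|_F^2\le N\,\text{dist}^2({\rm W},{\rm W}^\star)\le\tfrac{9N}{4}\text{dist}^2({\rm W},{\rm W}^\star)$.

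\noindent\emph{Lower bound \eqref{LOWER BOUND OF TWO DISTANCES IN DNN_1}.} Here I must \emph{construct} witnessing rotations. Let $\mQ_j:=\mW_N\cdots\mW_j$ and $\mQ_j^\star:=\mW_N^\star\cdots\mW_j^\star$ for $j\ge2$; these are column-orthonormal and their column spaces nest, $\col\mQ_2\subseteq\col\mQ_3\subseteq\cdots$ (same for the starred ones). Set $\mR_N:=\mId$ and, for $k=N-1,\dots,1$, let $\mR_k$ be the orthonormal polar factor of the cross-Gram $\mQ_{k+1}^{\star\top}\mQ_{k+1}\in\R^{d_k\times d_k}$, which has spectral norm $\le1$. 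Using $\mQ_i=\mQ_{i+1}\mW_i$, $\mQ_i^\star=\mQ_{i+1}^\star\mW_i^\star$, the resulting identity $\mQ_i^{\star\top}\mQ_i=\mW_i^{\star\top}(\mQ_{i+1}^{\star\top}\mQ_{i+1})\mW_i$, and the fact that right-multiplication by an orthonormal matrix does not move a column space, a short computation with the polar deficit bound gives, for $i=2,\dots,N$,
\begin{equation*}
\|\mW_i-\mR_i^\top\mW_i^\star\mR_{i-1}\|_F^2\ \le\ 2\,(S_i+S_{i+1})^2,\qquad S_j:=\big\|(\mId-\mQ_j^\star(\mQ_j^\star)^\top)\,\mQ_j\big\|_F,
\end{equation*}
with $S_{N+1}:=0$, where $S_j$ is the subspace distance between $\col\mQ_j$ and $\col\mQ_j^\star$. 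The first-layer term is handled on its own: with $\mU:=\mQ_2$, $\mU^\star:=\mQ_2^\star$, left-multiplying $\mP-\mP^\star=\mU\mW_1-\mU^\star\mW_1^\star$ by $(\mU^\star)^\top$ and inserting the polar factorization $(\mU^\star)^\top\mU=\mR_1\mH_1$ together with $\|\mW_1\|\le\tfrac32\|\mY^\star\|$ bounds $\|\mW_1-\mR_1^\top\mW_1^\star\|_F$ by $\|\mP-\mP^\star\|_F+\tfrac32\|\mY^\star\|\,S_2$. The delicate step is then to control \emph{each} $S_j$ by $\|\mP-\mP^\star\|_F$: projecting $\mP-\mP^\star$ onto $\col(\mQ_j)^\perp$ annihilates $\mP=\mQ_j\mW_{j-1}\cdots\mW_1$ and leaves $(\mId-\mQ_j\mQ_j^\top)\mP^\star$ with $\mP^\star=\mQ_j^\star\mW_{j-1}^\star\cdots\mW_1^\star$, and using $\|(\mId-\mQ_j\mQ_j^\top)\mQ_j^\star\mW_{j-1}^\star\cdots\mW_1^\star\|_F\ge\sigma_{\min}(\mW_1^\star)\cdot(\text{subspace term})$ with $\sigma_{\min}(\mW_1^\star)=\|\mY^\star\|/\kappa(\mY^\star)$ (after absorbing the intervening orthonormal factors and the directions of $\mW_j$ that are invisible to $\mP$ into the free completions of the $\mR_i$'s) gives $S_j\le\frac{\kappa(\mY^\star)}{\|\mY^\star\|}\|\mP-\mP^\star\|_F$. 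Assembling, $\text{dist}^2({\rm W},{\rm W}^\star)\le\|\mW_1-\mR_1^\top\mW_1^\star\|_F^2+\|\mY^\star\|^2\sum_{i=2}^{N}2(S_i+S_{i+1})^2\le(16N-8)\,\kappa^2(\mY^\star)\,\|\mP-\mP^\star\|_F^2$.

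\noindent\emph{Main obstacle.} The whitening reduction, the telescoping upper bound, and the per-layer polar estimates are routine. The hard part is the final step of the lower bound: showing that \emph{every} intermediate subspace distance $S_j$ — not only the bottleneck one between $\col\mU$ and $\col\mU^\star$ — is simultaneously controlled by $\|\mP-\mP^\star\|_F$, and doing so with only a linear-in-$N$ aggregate loss. This forces one to use the polar construction precisely so that per-layer discrepancies enter additively (through $S_i+S_{i+1}$) rather than multiplicatively along the chain of rotations, to exploit the contraction afforded by the nested projectors $\mQ_j^\star(\mQ_j^\star)^\top$, and to use the rank/conditioning of $\mW_1^\star$ carried by $\kappa(\mY^\star)$; tracking the directions of each $\mW_i$ that the end-to-end map cannot see — and that the orthonormal completions of the $\mR_i$ can therefore align for free — is what prevents a blow-up when the hidden widths grow. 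Essentially all of the work, and the constant $16N-8$, lives in this bookkeeping.
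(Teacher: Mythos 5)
Your whitening reduction, the telescoping upper bound (it is the paper's own decomposition with the starred factors moved to the other side, and your constant $N$ even improves on the stated $\tfrac{9N}{4}$), the per-layer polar estimates (a version with $\|\mW_i-\mR_i^\top\mW_i^\star\mR_{i-1}\|_F\le 2(S_i+S_{i+1})$ is indeed provable and the extra factor is harmless for $16N-8$), and the first-layer step via $\mQ_2^{\star\top}(\mY-\mY^\star)$ are all fine. The gap is precisely the step you yourself single out as delicate: the claim that \emph{every} $S_j$ obeys $S_j\le\kappa(\mY^\star)\|\mY-\mY^\star\|_F/\|\mY^\star\|$. Your justification extracts $\sigma_{\min}(\mW_1^\star)$ from $\|(\mId-\mQ_j\mQ_j^\top)\mQ_j^\star\mW_{j-1}^\star\cdots\mW_1^\star\|_F$, but the inequality $\|\mA\mB\|_F\ge\sigma\|\mA\|_F$ with $\mB=\mW_{j-1}^\star\cdots\mW_1^\star$ requires the factor $\sigma=\sigma_{d_{j-1}}(\mB)$, which is zero as soon as $d_{j-1}>\rank(\mW_1^\star)$ --- exactly the rectangular, ``growing widths'' regime your bookkeeping is meant to cover. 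No bookkeeping can rescue the claim there: take $N=3$, $d_0=d_1=1$, $d_2=2$, $d_3=3$, $\mW_1=\mW_1^\star=1$, $\mW_2=\mW_2^\star=\ve_1$, $\mW_3^\star=[\ve_1,\ \ve_2]$, $\mW_3=[\ve_1,\ \ve_3]$; all hypotheses of the lemma hold, $\mY=\mY^\star$, yet $S_3=1$, so no bound of the form $S_j\le C\|\mY-\mY^\star\|_F$ holds with any finite $C$ (the lemma itself survives only because $\sigma_{\min}(\mY^\star)=0$ makes \eqref{LOWER BOUND OF TWO DISTANCES IN DNN_1} vacuous). Your proposed remedy --- absorbing the directions invisible to the end-to-end map into the free completions of the $\mR_i$ --- cannot act on $S_j$, which is defined with no reference to the $\mR_i$; the invisible directions must be excised from the quantity you control, i.e., you must compare partial products aligned by rotations rather than raw column spaces.

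That aligned-partial-product estimate is exactly what the paper uses, and it does not prove it either: the entire content of the lower bound is imported from \cite[eq.~(E.4)]{Han20}, namely that for suitably chosen rotations $\|\mW_N\cdots\mW_j-\mW_N^\star\cdots\mW_j^\star\mR_{j-1}\|_F^2\le 4\|\mY-\mY^\star\|_F^2/\sigma_{\min}^2(\mY^\star)$; the paper then peels off one orthonormal factor per layer, handles $\mW_1$ through whitening and $\|\mW_1\|^2\le\tfrac94\|\mY^\star\|^2$ much as you do, and sums the per-layer constants to get $16N-8$. If you want to stay self-contained, the honest route is a case split: \eqref{LOWER BOUND OF TWO DISTANCES IN DNN_1} has content only when $\sigma_{\min}(\mY^\star)>0$, and since $\rank(\mY^\star)\le d_1\le\cdots\le d_N=d_y$ this forces $d_1=\cdots=d_N=d_y$, so all $\mW_i,\mW_i^\star$ with $i\ge2$ are square orthogonal; then every $S_j=0$, your rotations $\mR_i=\mQ_{i+1}^{\star\top}\mQ_{i+1}$ make the middle-layer terms vanish identically, and $\text{dist}^2({\rm W},{\rm W}^\star)\le\|\mW_1-\mR_1^\top\mW_1^\star\|_F^2=\|\mY-\mY^\star\|_F^2$, which is stronger than \eqref{LOWER BOUND OF TWO DISTANCES IN DNN_1}. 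With that split your plan closes (and sharpens the constant), but as written the heart of your lower bound --- finite-constant control of every intermediate subspace distance in the rectangular case --- is asserted rather than proved, and is false.
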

\begin{proof}
Using the result \cite[eq. (E.4)]{Han20}, for any $j\ge 2$, we have that  $\|\mW_N\cdots \mW_j - \mW_N^\star \cdots $ $ \mW_j^\star \mR_{j-1}\|_F^2\!\!\leq\!\! \frac{4\|\mY - \mY^\star\|_F^2}{\sigma_{\min}^2(\mY^\star)}$ for any $\mR_i\in\mathbb{O}^{d_{i}\times d_{i}}$. It follows that
\vspace{-0.1cm}
\begin{eqnarray}
    \label{LOWER BOUND OF TWO DISTANCES IN DNN_5}
    &\!\!\!\!\!\!\!\!&\|\mW_{N-1} - \mR_{N-1}^\top\mW_{N-1}^\star\mR_{N-2}\|_F^2 \nonumber\\
    &\!\!\!\!=\!\!\!\!& \|\mW_N^\star\mR_{N-1}\mW_{N-1} - \mW_{N}^\star\mW_{N-1}^\star\mR_{N-2}\|_F^2\nonumber\\
    &\!\!\!\!\leq\!\!\!\!&2\|\mW_N^\star\mR_{N-1} - \mW_N\|_F^2\|\mW_{N-1}\|^2\nonumber\\
    &\!\!\!\!\!\!\!\!& +2\|\mW_N\mW_{N-1} - \mW_{N}^\star\mW_{N-1}^\star\mR_{N-2}\|_F^2\nonumber\\
    &\!\!\!\!\leq\!\!\!\!&\frac{16\|\mY - \mY^\star\|_F^2}{\sigma_{\min}^2(\mY^\star)}.
\end{eqnarray}
Similarly, we get $\|\mW_{i}  -  \mR_{i}^\top\mW_{i}^\star\mR_{i-1}\|_F^2\leq \frac{16\|\mY - \mY^\star\|_F^2}{\sigma_{\min}^2(\mY^\star)}$ for   $i=2,\dots, N-2$. We now bound $\|\mW_1 - \mR_1^\top\mW_1^\star\|_F^2$ by
\vspace{-0.1cm}
\begin{eqnarray}
    \label{LOWER BOUND OF TWO DISTANCES IN DNN_7}
    &\!\!\!\!\!\!\!\!&\|\mW_1 - \mR_1^\top\mW_1^\star\|_F^2\nonumber\\
    &\!\!\!\!=\!\!\!\!& \|\mW_N^\star\cdots \mW_2^\star\mR_1 \mW_1 - \mW_N^\star\cdots \mW_1^\star\|_F^2\nonumber\\
    &\!\!\!\!\leq\!\!\!\!&2\|\mW_1\|^2\|\mW_N\cdots\mW_2 - \mW_N^\star\cdots\mW_2^\star\mR_1\|_F^2\nonumber\\
    &\!\!\!\!\!\!\!\!& + 2\|\mW_N\cdots \mW_1 -  \mW_N^\star\cdots \mW_1^\star\|_F^2\nonumber\\
    &\!\!\!\!\leq\!\!\!\!&\frac{20\|\mY^\star\|^2 \|\mY - \mY^\star\|_F^2}{\sigma_{\min}^2(\mY^\star)},
\end{eqnarray}
where the second inequality uses the fact that $\mY = \mW_N\cdots \mW_1 \mX$ and $\|\mA\mX\|_F = \|\mA\|_F$ for any $\mA$ since $\mX$ is whitened. Based on the preceding discussion and the definition of $\text{dist}^2({\rm W},{\rm W}^\star)$, we can conclude \eqref{LOWER BOUND OF TWO DISTANCES IN DNN_1}.

Finally, we can prove the other direction by
\vspace{-0.1cm}
\begin{eqnarray}
    \label{UPPER BOUND OF TWO DISTANCES IN DNN_2}
    &\!\!\!\!\!\!\!\!\!\!&\|\mY-\mY^\star\|_F^2\nonumber\\
     &\!\!\!\!\!=\!\!\!\!\!& \|\sum_{i=1}^N \mW_N^\star\cdots \mW_{i+1}^\star\mR_i(\mW_i \!-\! \mR_i^\top\mW_i^\star\mR_{i-1} )\mW_{i-1}\cdots\mX\|_F^2\nonumber\\
    &\!\!\!\!\!\leq\!\!\!\!\!& N\bigg(\sum_{i=2}^{N} \frac{9\|\mY^\star\|^2}{4}\|\mW_i-\mR_{i}^\top{\mW_i^{\star}}\mR_{i-1}\|_F^2\nonumber\\
    &\!\!\!\!\!\!\!\!\!\!&+ \|\mW_1 - \mR_1^\top\mW_1^\star\|_F^2\bigg)\leq\frac{9N}{4}\text{dist}^2({\rm W},{\rm W}^\star).
\end{eqnarray}

\end{proof}

\vspace{-0.45cm}
\paragraph*{Main results}
To establish the convergence rate of RGD, we require the loss function $L$ to satisfy a certain property. Given that our primary focus is the analysis of the local convergence, we will assume that the loss function behaves well only in a local region. Specifically, we will consider a category of loss functions that satisfies the so-called restricted correlated gradient (RCG) condition~\cite{Han20}:

\begin{definition}
We say the loss function $L(\cdot;\mY^\star)$ satisfies $\text{RCG}(\alpha,\beta,\calC)$ condition for $\alpha, \beta>0$ and the set $\mathcal{C}$ if
\begin{eqnarray}
    \label{RCG condition}
    &&\hspace{-1.5cm}\<\nabla L(\mY_1;\mY^\star)-\nabla L(\mY_2;\mY^\star), \mY_1 - \mY_2\>\nonumber\\
    &&\hspace{-1.2cm}\geq\alpha\|\mY_1 - \mY_2\|_F^2+\beta\|\nabla L(\mY_1;\mY^\star)-\nabla L(\mY_2;\mY^\star)\|_F^2
\end{eqnarray}
for any $\mY_1,\mY_2\in\calC$.
\end{definition}
The RCG condition is a generalization of the strong convexity. When $L$ represents the MSE loss, i.e., $L(\mY,\mY^\star) = \|\mY - \mY^\star\|_F^2$, which is commonly used in the convergence analysis of training deep linear networks \cite{zou2020global,arora2018convergence,zhu2020global}, it satisfies the RCG condition with $\alpha = \beta = 1$ and $\calC = \R^{d_y\times n}$. The RCG condition may also accommodate other loss functions such as the cross entropy (CE) loss.

Based on \Cref{RCG condition},  we can initially deduce the Riemannian regularity condition as an extension of the regularity condition found in matrix factorization \cite{Tu16,Zhu21TIT}, ensuring that gradients remain well-behaved within a defined region.
Specifically, we have
\begin{lemma} (Riemannian regularity condition)
\label{riemannian regularity condition}
Suppose the training data $(\mX,\mY^\star)$ is generated according to the data model in \Cref{def:data-model}. Also assume that the loss function $L$ in \eqref{DLN_LOSS_FUNCTION_1 general} adheres to the $\text{RCG}(\alpha,\beta,\calC)$ condition where $\calC  \triangleq  \{ \mY: \|\mY-\mY^\star\|_F^2\leq \frac{\alpha\beta\sigma_{\min}^2(\mY^\star)}{72(2N-1)^2(N^2-1)\kappa^2(\mY^\star)} \}$.  Under this assumption, for any ${\rm W}\in \{{\rm W}:  \text{dist}^2({\rm W},{\rm W}^\star)\leq  \frac{\alpha\beta\sigma_{\min}^2(\mY^\star)}{9(2N-1)(N^2-1)}  \}$, the function $g$ in \eqref{DLN_LOSS_FUNCTION_1 general} satisfies the Riemannian regularity condition as following:
\vspace{-0.25cm}
\begin{eqnarray}
    \label{Riemannian regularity condition eqn}
    &\!\!\!\!\!\!\!\!&\sum_{i=2}^{N} \bigg\<{\mW}_i - \mR_i^\top\mW_i^\star \mR_{i-1}, \calP_{\text{T}_{\mW_i} \text{St}}(\nabla_{\mW_i}g )\bigg\>\nonumber\\
    &\!\!\!\!\!\!\!\!&+ \<\mW_1 - \mR_1\mW_1^\star,  \nabla_{\mW_1}g \>\nonumber\\
    &\!\!\!\!\geq \!\!\!\!&\frac{\alpha}{16(2N-1)\kappa^2(\mY^\star)}\text{dist}^2({\rm W},{\rm W}^\star) + \frac{\beta}{(9N-5)\|\mY^\star\|^2 }\nonumber\\
    &\!\!\!\!\!\!\!\!&\times \bigg(\sum_{i=2}^{N}\|\calP_{\text{T}_{\mW_i} \text{St}}(\nabla_{\mW_i}g ) \|_F^2 + \|\mY^\star\|^2\|\nabla_{\mW_1}g\|_F^2\bigg).
\end{eqnarray}
\end{lemma}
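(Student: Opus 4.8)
The plan is to establish the Riemannian regularity condition \eqref{Riemannian regularity condition eqn} by transferring the RCG condition on the output space to the weight space, using the chain rule to relate the Riemannian gradients $\calP_{\text{T}_{\mW_i}\text{St}}(\nabla_{\mW_i}g)$ (and the Euclidean gradient $\nabla_{\mW_1}g$) to the loss gradient $\nabla L(\mY;\mY^\star)$, and then invoking \Cref{LOWER BOUND OF TWO DISTANCES IN DNN}. First I would verify that the hypotheses of \Cref{LOWER BOUND OF TWO DISTANCES IN DNN} hold throughout the prescribed neighborhood: the condition $\text{dist}^2({\rm W},{\rm W}^\star)\le \frac{\alpha\beta\sigma_{\min}^2(\mY^\star)}{9(2N-1)(N^2-1)}$ should, via \eqref{UPPER BOUND OF TWO DISTANCES IN DNN_1}, force $\|\mY-\mY^\star\|_F^2$ into the set $\calC$ where RCG is valid, and it should also bound $\|\mW_1\|^2\le \tfrac{9}{4}\|\mY^\star\|^2$ (using $\|\mW_1^\star\|=\|\mY^\star\|$ and the triangle inequality together with the numeric slack in the radius).

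Next I would compute the inner products on the left-hand side. Writing $\mathbf{\Delta}_i = \mW_i - \mR_i^\top\mW_i^\star\mR_{i-1}$ for $i\ge 2$ and $\mathbf{\Delta}_1 = \mW_1 - \mR_1\mW_1^\star$, and using the fact that $\nabla_{\mW_i}g = (\mW_N\cdots\mW_{i+1})^\top \nabla L(\mY;\mY^\star)\mX^\top (\mW_{i-1}\cdots\mW_1)^\top$ with $\mX$ whitened, I would sum the telescoping expansion of $\mY-\mY^\star$ (exactly the expansion already used in \eqref{UPPER BOUND OF TWO DISTANCES IN DNN_2}) to show that
\[
\sum_{i=2}^{N}\langle \mathbf{\Delta}_i,\calP_{\text{T}_{\mW_i}\text{St}}(\nabla_{\mW_i}g)\rangle + \langle\mathbf{\Delta}_1,\nabla_{\mW_1}g\rangle \;\approx\; \langle \nabla L(\mY;\mY^\star), \mY-\mY^\star\rangle \;+\; (\text{cross terms}),
\]
where the projection $\calP_{\text{T}_{\mW_i}\text{St}}$ can be inserted for free because $\mathbf{\Delta}_i$ has a component in $\text{T}_{\mW_i}\text{St}$ up to a second-order term in $\mathbf{\Delta}_i$ (since both $\mW_i$ and $\mR_i^\top\mW_i^\star\mR_{i-1}$ are orthonormal, $\mathbf{\Delta}_i^\top\mW_i + \mW_i^\top\mathbf{\Delta}_i = -\mathbf{\Delta}_i^\top\mathbf{\Delta}_i$, which is quadratically small). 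The RCG lower bound applied to $\mY_1=\mY,\mY_2=\mY^\star$ (note $\nabla L(\mY^\star;\mY^\star)=0$) then gives $\langle\nabla L(\mY;\mY^\star),\mY-\mY^\star\rangle \ge \alpha\|\mY-\mY^\star\|_F^2 + \beta\|\nabla L(\mY;\mY^\star)\|_F^2$; the first term is lower-bounded by $\text{dist}^2$ via \eqref{LOWER BOUND OF TWO DISTANCES IN DNN_1}, and the $\|\nabla L\|_F^2$ term must be converted into a sum of $\|\calP_{\text{T}_{\mW_i}\text{St}}(\nabla_{\mW_i}g)\|_F^2 + \|\mY^\star\|^2\|\nabla_{\mW_1}g\|_F^2$ by using that the weight matrices are near-isometries, so each factor product $\mW_N\cdots\mW_{i+1}$ and $\mW_{i-1}\cdots\mW_1$ is well-conditioned and $\|\nabla_{\mW_i}g\|_F \le \|\nabla L\|_F$ roughly, with the $\|\mY^\star\|^2$ weight on the $i=1$ term matching the energy normalization in $\text{dist}^2$.

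The main obstacle I expect is controlling the error terms that arise from two sources: (i) the second-order terms from inserting the tangent-space projection and from the non-orthonormality of $\mW_1$ (so that $\|\mathbf{\Delta}_i\|$, $\|\mathbf{\Delta}_1\|$ appear cubically rather than quadratically), and (ii) the "cross terms" in the telescoping expansion of $\mY-\mY^\star$ where a perturbation $\mathbf{\Delta}_i$ multiplies products of the other $\mW_j$'s rather than the matching $\mW_j^\star$'s — these must be absorbed into the main $\alpha$ and $\beta$ terms, which is precisely why the neighborhood radius carries the factors $(2N-1)$, $(N^2-1)$, and $\kappa^2(\mY^\star)$. Carefully tracking how many such cross terms there are (on the order of $N^2$), bounding each by $\|\mathbf{\Delta}\|_F \cdot \|\nabla L\|_F$ times a factor controlled by the neighborhood constraint, and showing the total is at most half of the main terms, is where the bookkeeping is heaviest; the stated constants in \Cref{riemannian regularity condition} are exactly calibrated so this absorption succeeds. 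Once the error terms are dominated, combining the RCG lower bound with the two bounds of \Cref{LOWER BOUND OF TWO DISTANCES IN DNN} yields \eqref{Riemannian regularity condition eqn} after collecting constants.
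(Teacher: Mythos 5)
Your plan follows essentially the same route as the paper's proof: it expresses the left-hand side via the chain-rule gradients and the telescoping expansion of $\mY-\mY^\star$ (the paper's residual term $\mH$), treats the tangent-space projection error as quadratically small in $\mW_i - \mR_i^\top\mW_i^\star\mR_{i-1}$ (the paper's term $T$, using $\calP^{\perp}_{\text{T}_{\mW_i}\text{St}}(\mDelta_i)=\tfrac12\mW_i\mDelta_i^\top\mDelta_i$; note the identity is $\mDelta_i^\top\mW_i+\mW_i^\top\mDelta_i=+\mDelta_i^\top\mDelta_i$, a harmless sign slip on your part), applies RCG at $(\mY,\mY^\star)$ with $\nabla L(\mY^\star;\mY^\star)=0$, converts between output-space and weight-space quantities via \Cref{LOWER BOUND OF TWO DISTANCES IN DNN} and the bounds $\|\nabla_{\mW_i}g\|_F\le\|\mW_1\|\,\|\nabla L\|_F$ with $\|\mW_1\|^2\le\tfrac94\|\mY^\star\|^2$, and absorbs the $\mathrm{dist}^4$ error terms using the neighborhood radius. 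This is the paper's argument in outline, so the proposal is correct in approach, with only the detailed constant bookkeeping left to carry out.
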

\begin{proof} To begin with, we can derive $\|\mW_1\|^2\leq2\|\mW_1^\star\|^2+2\|\mW_1-\mR_1^\top\mW_1^\star\|^2\leq 2\|\mY^\star\|^2+2\text{dist}^2({\rm W},{\rm W}^\star) \leq 2\|\mY^\star\|^2+\frac{2\alpha\beta\sigma_{\min}^2(\mY^\star)}{9(2N-1)(N^2-1)}
\leq\frac{9\|\mY^\star\|^2}{4}$ where $\alpha\beta\leq \frac{1}{4}$ \cite{Han20} is used .

Next, through the gradients $\nabla_{\mW_i}g  = {\mW}_{i+1}^\top \cdots \mW_N^\top$ $ \nabla L(\mY;\mY^\star)\mX^\top\cdots {\mW}_{i-1}^\top, i\in [N]$, we need to derive
\vspace{-0.15cm}
\begin{eqnarray}
    \label{PROJECTED GRADIENT DESCENT SQUARED TERM of general function DNN 1}
    &\!\!\!\!\!\!\!\!&\|\nabla_{\mW_1}g\|_F^2\leq\|\nabla L(\mY;\mY^\star) - \nabla L(\mY^\star;\mY^\star)\|_F^2,\\
    \label{PROJECTED GRADIENT DESCENT SQUARED TERM of general function DNN 2}
    &\!\!\!\!\!\!\!\!&\|\nabla_{\mW_i}g\|_F^2\leq\|\mW_1\|^2\|\nabla L(\mY;\mY^\star)\|_F^2\nonumber\\
    &\!\!\!\!\leq\!\!\!\!&\frac{9\|\mY^\star\|^2}{4}\|\nabla L(\mY;\mY^\star) - \nabla L(\mY^\star;\mY^\star)\|_F^2,
\end{eqnarray}
where we employ $\nabla L(\mY^\star;\mY^\star) = 0$. Combing \eqref{PROJECTED GRADIENT DESCENT SQUARED TERM of general function DNN 1} and \eqref{PROJECTED GRADIENT DESCENT SQUARED TERM of general function DNN 2}, we can obtain
\vspace{-0.1cm}
\begin{eqnarray}
    \label{RIEMANNIAN FACTORIZATION SQUARED TERM UPPER BOUND of general function}
    &\!\!\!\!\!\!\!\!&\sum_{i=2}^{N}\|\calP_{\text{T}_{\mW_i} \text{St}}(\nabla_{\mW_i}g ) \|_F^2 + \|\mY^\star\|^2\|\nabla_{\mW_1}g\|_F^2\nonumber\\
    &\!\!\!\!\leq\!\!\!\!&\sum_{i=2}^{N}\|\nabla_{\mW_i}g  \|_F^2 + \|\mY^\star\|^2\|\nabla_{\mW_1}g\|_F^2\nonumber\\
    &\!\!\!\!\leq\!\!\!\!&\frac{9N-5}{4}\|\mY^\star\|^2\|\nabla L(\mY;\mY^\star) - \nabla L(\mY^\star;\mY^\star)\|_F^2,
\end{eqnarray}
in which the first inequality follows from the fact that for any matrix $\mB = \calP_{\text{T}_{L({\mX}_i)} \text{St}}(\mB) + \calP_{\text{T}_{L({\mX}_{i})} \text{St}}^{\perp}(\mB)$ where $\calP_{\text{T}_{L({\mX}_i)} \text{St}}(\mB)$ and $\calP_{\text{T}_{L({\mX}_{i})} \text{St}}^{\perp}(\mB)$ are orthogonal, we have $\|\calP_{\text{T}_{L({\mX}_i)} \text{St}}(\mB)\|_F^2\leq \|\mB\|_F^2$.

Before analyzing the lower bound of cross term in \eqref{RCG condition}, we need to establish the upper bound for the inner product between the orthogonal complement and the gradient as following:
\begin{eqnarray}
    \label{PROJECTION ORTHOGONAL IN RIEMANNIAN DNN UPPER BOUND of general function}
    T&\!\!\!\!=\!\!\!\!&\sum_{i=2}^{N}\<\calP^{\perp}_{\text{T}_{\mW_i} \text{St}}(\mW_i - \mR_i^\top\mW_i^\star \mR_{i-1}), \nabla_{\mW_i}g \>\nonumber\\
    &\!\!\!\!\leq\!\!\!\!&\sum_{i=2}^{N}\frac{1}{2}\|\mW_i\| \|\mW_i - \mR_i^\top\mW_i^\star \mR_{i-1}\|_F^2\nonumber\\
    &\!\!\!\!\!\!\!\!& \times\|\nabla L(\mY;\mY^\star) - \nabla L(\mY^\star;\mY^\star)\|_F\|\mW_1\| \nonumber\\
    &\!\!\!\!\leq\!\!\!\!&\frac{\beta}{4}\|\nabla L(\mY;\mY^\star) - \nabla L(\mY^\star;\mY^\star)\|_F^2\nonumber\\
    &\!\!\!\!\!\!\!\!&+\frac{9(N-1)\|\mY^\star\|^2}{16\beta}\sum_{i=2}^{N}\|\mW_i - \mR_i^\top\mW_i^\star \mR_{i-1}\|_F^4\nonumber\\
    &\!\!\!\!\leq\!\!\!\!&\frac{\beta}{4}\|\nabla L(\mY;\mY^\star) - \nabla L(\mY^\star;\mY^\star)\|_F^2\nonumber\\
    &\!\!\!\!\!\!\!\!&+\frac{9(N-1)}{16\beta\|\mY^\star\|^2}\text{dist}^4({\rm W},{\rm W}^\star),
\end{eqnarray}
where $\calP^{\perp}_{\text{T}_{\mW_i} \text{St}}(\mW_i- \mR_i^\top\mW_i^\star \mR_{i-1})
        =\frac{1}{2}\mW_i(\mW_i - \mR_i^\top\mW_i^\star $ $ \mR_{i-1})^\top  (\mW_i - \mR_i^\top\mW_i^\star \mR_{i-1})$ \cite{LiSIAM21} and  $\nabla L(\mY^\star;\mY^\star) = 0$.

Let us now introduce the notation $\mH \!\!= \!\!\mY^\star -\mW_N $ $ \cdots \mW_2  \mR_1^\top\mW_1^\star \mX + \sum_{i=1}^N\mW_N \cdots  \mW_{i+1}(\mW_i - \mR_i^\top \mW_i^\star $ $ \mR_{i-1})\mW_{i-1}\cdots \mW_1\mX$, enabling us to simplify the expression of the cross term within \eqref{Riemannian regularity condition eqn}. Then \eqref{Riemannian regularity condition eqn} can be rewritten as
\vspace{-0.1cm}
\begin{eqnarray}
    \label{RIEMANNIAN FACTORIZATION DNN CROSS TERM LOWER BOUND of general function}
    &\!\!\!\!\!\!\!\!&\sum_{i=2}^{N} \bigg\<\mW_i - \mR_i^\top\mW_i^\star \mR_{i-1}, \calP_{\text{T}_{\mW_i} \text{St}}(\nabla_{\mW_i}g )\bigg\> \nonumber\\
    &\!\!\!\!\!\!\!\!&+ \<\mW_1 - \mR_1^\top\mW_1^\star,  \nabla_{\mW_1}g \>\nonumber\\
    &\!\!\!\!=\!\!\!\!&\<\nabla L(\mY;\mY^\star) - \nabla L(\mY^\star;\mY^\star), \mY - \mY^\star + \mH  \> - T\nonumber\\
    &\!\!\!\!\geq\!\!\!\!&\alpha\|\mY-\mY^\star\|_F^2 + \frac{\beta}{4}\|\nabla L(\mY;\mY^\star) - \nabla L(\mY^\star;\mY^\star)\|_F^2\nonumber\\
    &\!\!\!\!\!\!\!\!&-\frac{9(N^2-1)}{16\|\mY^\star\|^2}\text{dist}^4({\rm W},{\rm W}^\star)\nonumber\\
    &\!\!\!\!\geq\!\!\!\!&\frac{\alpha}{16(2N-1)\kappa^2(\mY^\star)}\text{dist}^2({\rm W},{\rm W}^\star)+ \frac{\beta}{(9N-5)\|\mY^\star\|^2 } \nonumber\\
    &\!\!\!\!\!\!\!\!& \times \bigg(\sum_{i=2}^{N}\|\calP_{\text{T}_{\mW_i} \text{St}}(\nabla_{\mW_i}g ) \|_F^2 + \|\mY^\star\|^2\|\nabla_{\mW_1}g\|_F^2\bigg),
\end{eqnarray}
where the first inequality follows the RCG condition, \eqref{PROJECTION ORTHOGONAL IN RIEMANNIAN DNN UPPER BOUND of general function} and $\|\mH\|_F^2\leq \frac{9N(N-1)}{8\|\mY^\star\|^2}\text{dist}^4({\rm W},{\rm W}^\star)$  which is established through a mathematical transformation and the use of norm inequalities. The detailed proof for the upper bound of $\|\mH\|_F^2$ has been omitted here due to space limitations. In the last line, we leverage \eqref{RIEMANNIAN FACTORIZATION SQUARED TERM UPPER BOUND of general function}, \Cref{LOWER BOUND OF TWO DISTANCES IN DNN} and $\text{dist}^2({\rm W},{\rm W}^\star)\leq \frac{\alpha\beta\sigma_{\min}^2(\mY^\star)}{9(2N-1)(N^2-1)}$.

\end{proof}
\vspace{-0.1cm}
We note that according to \eqref{LOWER BOUND OF TWO DISTANCES IN DNN_1} in \Cref{LOWER BOUND OF TWO DISTANCES IN DNN}, the set $\{{\rm W}:  \text{dist}^2({\rm W},{\rm W}^\star)\leq  \frac{\alpha\beta\sigma_{\min}^2(\mY^\star)}{9(2N-1)(N^2-1)}  \}$ implies the region $\calC$.
By leveraging Riemannian regularity condition in \Cref{riemannian regularity condition} and the nonexpansiveness property of the polar decomposition-based retraction in \cite[Lemma 1]{LiSIAM21}, we ultimately reach the following conclusion:
\begin{theorem}
\label{Local Convergence of DNN General Stiefel_Theorem}
In accordance with the identical conditions outlined in \Cref{riemannian regularity condition}, we assume the initialization satisfies $\text{dist}^2({\rm W}^{(0)},{\rm W^\star})\leq \frac{\alpha\beta\sigma_{\min}^2(\mY^\star)}{9(2N-1)(N^2-1)}$. When employing the learning rate $\mu\leq\frac{2\beta}{(9N-5)\|\mY^\star\|^2}$ and $\gamma = \|\mY^\star\|^2$ in RGD \eqref{SGD_GRADIENT_DESCENT_1_2}, we have
\vspace{-0.2cm}
\begin{eqnarray}
    \label{Local Convergence of DNN General Stiefel_Theorem_1}
    \text{dist}^2({\rm W}^{(t+1)}, \! {\rm W}^\star) \! \leq \! \bigg(1 \! -\! \frac{\alpha\sigma_{\min}^2(\mY^\star)\mu}{8(2N-1)}\bigg)\text{dist}^2({\rm W}^{(t)},\! {\rm W}^\star).\nonumber
\end{eqnarray}
\end{theorem}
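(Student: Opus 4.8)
The plan is to prove the bound by induction on $t$, showing simultaneously that every iterate stays inside the region where \Cref{riemannian regularity condition} applies, i.e.\ $\text{dist}^2({\rm W}^{(t)},{\rm W}^\star)\le \frac{\alpha\beta\sigma_{\min}^2(\mY^\star)}{9(2N-1)(N^2-1)}$; the base case $t=0$ is exactly the initialization hypothesis. For the inductive step, fix $t$ and let $\{\mR_i^{(t)}\}_{i\in[N-1]}$ be the orthonormal matrices attaining the minimum defining $\text{dist}^2({\rm W}^{(t)},{\rm W}^\star)$. Because $\text{dist}^2(\cdot,{\rm W}^\star)$ is itself a minimum over all rotations, substituting these \emph{same} matrices into the definition at step $t+1$ yields
\[
\text{dist}^2({\rm W}^{(t+1)},{\rm W}^\star)\le \sum_{i=2}^N\|\mY^\star\|^2\big\|\mW_i^{(t+1)}-(\mR_i^{(t)})^\top\mW_i^\star\mR_{i-1}^{(t)}\big\|_F^2+\big\|\mW_1^{(t+1)}-(\mR_1^{(t)})^\top\mW_1^\star\big\|_F^2,
\]
which decouples the one-step analysis across the $N$ layers.

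Next I would expand each layer's term. For $\mW_1$, whose update $\mW_1^{(t+1)}=\mW_1^{(t)}-\mu\gamma\nabla_{\mW_1^{(t)}}g$ is Euclidean, expanding the square gives the unperturbed term $\|\mW_1^{(t)}-(\mR_1^{(t)})^\top\mW_1^\star\|_F^2$, a cross term $-2\mu\gamma\big\langle\nabla_{\mW_1^{(t)}}g,\,\mW_1^{(t)}-(\mR_1^{(t)})^\top\mW_1^\star\big\rangle$, and a quadratic term $\mu^2\gamma^2\|\nabla_{\mW_1^{(t)}}g\|_F^2$. For $i\ge2$, I use that $(\mR_i^{(t)})^\top\mW_i^\star\mR_{i-1}^{(t)}$ lies on the Stiefel manifold, together with the nonexpansiveness of the polar-decomposition retraction \cite[Lemma 1]{LiSIAM21}, to discard the retraction:
\[
\big\|\mW_i^{(t+1)}-(\mR_i^{(t)})^\top\mW_i^\star\mR_{i-1}^{(t)}\big\|_F^2\le\big\|\mW_i^{(t)}-\mu\calP_{\text{T}_{\mW_i} \text{St}}(\nabla_{\mW_i^{(t)}}g)-(\mR_i^{(t)})^\top\mW_i^\star\mR_{i-1}^{(t)}\big\|_F^2,
\]
and then expand exactly as before, obtaining the unperturbed term, the cross term $-2\mu\big\langle\calP_{\text{T}_{\mW_i} \text{St}}(\nabla_{\mW_i^{(t)}}g),\,\mW_i^{(t)}-(\mR_i^{(t)})^\top\mW_i^\star\mR_{i-1}^{(t)}\big\rangle$, and the quadratic term $\mu^2\|\calP_{\text{T}_{\mW_i} \text{St}}(\nabla_{\mW_i^{(t)}}g)\|_F^2$.

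Summing over the layers, the unperturbed terms reassemble $\text{dist}^2({\rm W}^{(t)},{\rm W}^\star)$. The key bookkeeping point is that taking $\gamma=\|\mY^\star\|^2$ gives the $\mW_1$ cross term the weight $\mu\|\mY^\star\|^2$, matching the weight $\mu\|\mY^\star\|^2$ on each $\mW_i$ cross term coming from the coefficient $\|\mY^\star\|^2$ in $\text{dist}^2$; hence the total cross term equals exactly $-2\mu\|\mY^\star\|^2$ times the left-hand side of \eqref{Riemannian regularity condition eqn}, and the total quadratic term is $\mu^2\|\mY^\star\|^2 Q_t$ with $Q_t:=\sum_{i=2}^N\|\calP_{\text{T}_{\mW_i} \text{St}}(\nabla_{\mW_i^{(t)}}g)\|_F^2+\|\mY^\star\|^2\|\nabla_{\mW_1^{(t)}}g\|_F^2$. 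Applying \Cref{riemannian regularity condition} (valid by the inductive hypothesis) to lower-bound that cross term and using $\|\mY^\star\|^2/\kappa^2(\mY^\star)=\sigma_{\min}^2(\mY^\star)$ gives
\[
\text{dist}^2({\rm W}^{(t+1)},{\rm W}^\star)\le\Big(1-\tfrac{\mu\alpha\sigma_{\min}^2(\mY^\star)}{8(2N-1)}\Big)\text{dist}^2({\rm W}^{(t)},{\rm W}^\star)+\mu Q_t\Big(\mu\|\mY^\star\|^2-\tfrac{2\beta}{9N-5}\Big).
\]
The learning-rate condition $\mu\le\frac{2\beta}{(9N-5)\|\mY^\star\|^2}$ makes the last term nonpositive, which yields the claimed contraction; and since $\alpha\beta\le\frac14$ and $\kappa(\mY^\star)\ge1$, the contraction factor lies in $[0,1)$, so ${\rm W}^{(t+1)}$ is still in the regularity region and the induction closes. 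I expect the main difficulty to be organizational rather than conceptual: cleanly discarding the retraction via nonexpansiveness and checking that under $\gamma=\|\mY^\star\|^2$ the cross and quadratic terms collapse precisely onto the two sides of \eqref{Riemannian regularity condition eqn}; once that alignment is in place, the contraction and the role of the learning-rate bound are immediate.
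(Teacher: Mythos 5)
Your proposal is correct and matches the paper's (only sketched) argument: the paper likewise obtains the theorem by fixing the optimal rotations at step $t$, discarding the retraction via the nonexpansiveness result of \cite[Lemma 1]{LiSIAM21}, expanding the squared distances, and invoking the Riemannian regularity condition of \Cref{riemannian regularity condition}, with $\gamma=\|\mY^\star\|^2$ aligning the cross and quadratic terms and the step-size bound killing the quadratic remainder. Your bookkeeping of the constants (contraction factor $1-\frac{\mu\alpha\sigma_{\min}^2(\mY^\star)}{8(2N-1)}$ and the condition $\mu\le\frac{2\beta}{(9N-5)\|\mY^\star\|^2}$) and the induction keeping iterates in the regularity region are exactly what is needed.
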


Our results reveal that the RGD demonstrates a linear convergence rate with polynomial decay concerning $N$. In addition, by \Cref{LOWER BOUND OF TWO DISTANCES IN DNN}, we can easily obtain $\|\mY^{(t)} - \mY^\star\|_F^2\leq (1 \! -\! \frac{\alpha\sigma_{\min}^2(\mY^\star)\mu}{8(2N-1)})^t\frac{\alpha\beta\sigma_{\min}^2(\mY^\star)}{8N^2 - 12N +4}$. It is worth noting that, through analogous analysis, \Cref{Local Convergence of DNN General Stiefel_Theorem} can be extended to a broader scenario wherein $\mW_j^\star$ takes on an arbitrary matrix, and matrices $\mW_i^\star$ exhibit row orthogonality for $i < j$ and column orthogonality for $i > j$.
Moreover, we emphasize that our focus is primarily on the local convergence property of the RGD, and does not cover initialization methods extensively investigated in prior research, such as those discussed in \cite{saxe2013exact, he2015delving, xiao2018dynamical, bartlett2018gradient}.

The research most closely related to our work is the convergence analysis of gradient descent in deep linear neural models across multiple layers, as demonstrated in \cite{arora2018convergence}, wherein the MSE loss function is taken into account. It is established that $\|\mY^{(t)} - \mY^\star\|_F^2\leq \epsilon$ can be deduced when $t\geq \Omega(\frac{N^3\|\mY^\star\|_F^{6}}{c^2}\log(\frac{1}{\epsilon}) )$, provided that $c \!\leq \! \sigma_{\min}(\mY^\star)$. Applying a similar derivation as presented in \cite[Theorem 1]{arora2018convergence}, utilizing both \Cref{Local Convergence of DNN General Stiefel_Theorem} and \Cref{LOWER BOUND OF TWO DISTANCES IN DNN}, we can infer that $t\geq \Omega(N^2\kappa^2(\mY^\star) \log(\frac{1}{\epsilon}))$ in the RGD is sufficient to meet the same requirement. This further highlights the convergence advantage of the RGD.

\vspace{-0.3cm}
\section{Experiments}
\label{Experiment results}

In this section, we conduct experiments to compare the performance of the RGD with gradient descent (GD). Specifically, we concentrate on the multi-class classification task using the MNIST dataset, where the input feature dimension is $784$, while the output feature is $10$, represented as ${\vy_i}\in\R^{10}$. In this representation, each ${\vy_i}$ is designed such that it holds a value of $1$ solely at the position that aligns with its categorical label, leaving the other positions assigned to $0$. For our model architecture, we can deploy an multi-layer perceptron (MLP) with $N$ layers where $\mW_1\in\R^{{100}\times {784}}$, $\{\mW_i\}_{i=2}^{N-2}\in\R^{{100}\times {100}}$, $\mW_{N-1}\in\R^{{50}\times {100}}$, and $\mW_N\in\R^{{10}\times {50}}$. Each layer in the MLP is connected by a linear or rectified linear unit (Relu) activation function. While our theoretical result is only established for linear networks, we will also test the performance on a nonlinear MLP without bias terms.

We apply the GD and RGD for the CE loss function \cite{zhou2022all} in combination with the softmax function to train MLP models. The weight matrices are initialized using orthogonal initialization \cite{saxe2013exact}, except for $\mW_1$ in the RGD, which follows a uniform distribution within the range of $(-\frac{1}{\sqrt{784}}, \frac{1}{\sqrt{784}})$ \cite{he2015delving}. In addition, we perform a grid search to fine-tune the hyperparameters ($\mu$ and $\gamma$).
\vspace{-0.2cm}
\begin{figure}[htbp]
\centering
\includegraphics[width=6cm, keepaspectratio]%
{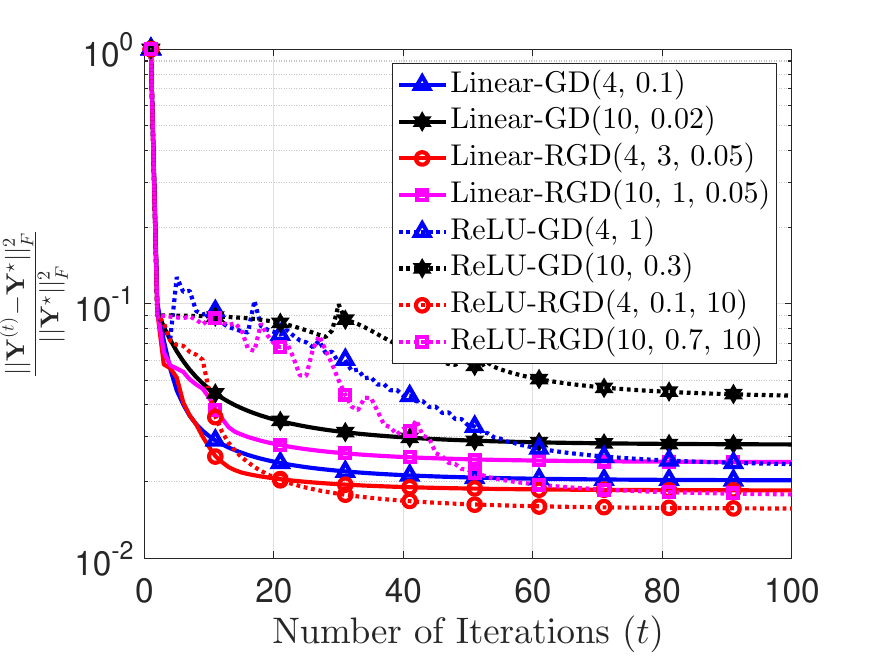}
\caption{Convergence analysis for GD($N$, $\mu$) and RGD($N$, $\mu$, $\gamma$) with different activation functions and $N$.}
\label{Convergence analysis for GD and RGD}
\end{figure}
\vspace{-0.3cm}
In Figure~\ref{Convergence analysis for GD and RGD}, it is evident that RGD achieves a quicker convergence compared to GD. Notably, with an increase in the number of layers, the convergence rate decreases in alignment with our theoretical analysis. Moreover, due to the impact of nonlinear activation functions, algorithms that employ ReLU demonstrate a comparatively slower convergence rate compared to those utilizing linear activation functions. However, despite this, the error of the RGD with Relu outperforms that of the RGD using a linear activation function.

\vspace{-0.25cm}
\section{Conclusion}
\label{conclusion}

In this letter, we have provided a convergence analysis of the Riemannian gradient descent for a specific class of loss functions within orthonormal deep linear neural networks. Remarkably, our analysis guarantees a linear convergence rate, provided appropriate initialization.  This will serve as a stepping stone for future explorations of training nonlinear orthonormal deep neural networks with adaptive learning rates.

%

\newpage
\balance

{\small

}

\end{document}